\newcommand{\bi}{\begin{itemize}}
\newcommand{\ei}{\end{itemize}}
\newcommand{\be}{\begin{enumerate}}
\newcommand{\ee}{\end{enumerate}}
\begin{document}
\title{Generalising realisability in statistical learning theory under epistemic uncertainty} 
%
%
\author{Fabio Cuzzolin\inst{1}} 
\authorrunning{F. Cuzzolin}
%
\institute{Oxford Brookes University, UK \\ \email{fabio.cuzzolin@brookes.ac.uk}
}
\maketitle              
\begin{abstract}
The purpose of this paper is to look into how central notions in statistical learning theory, such as realisability, generalise under the assumption that train and test distribution are issued from the same \emph{credal set}, i.e., a convex set of probability distributions. This can be considered as a first step towards a more general treatment of statistical learning under epistemic uncertainty.

\keywords{Statistical learning theory \and epistemic uncertainty \and imprecise probabilities \and credal sets \and realisability \and generalisation bounds.}
\end{abstract}

\section{Introduction} \label{sec:introduction}

Statistical learning theory \cite{vapnik1999overview,vapnik2013nature} considers the problem of predicting an output $y \in \mathcal{Y}$ given an input $x \in \mathcal{X}$, by means of a mapping $h : \mathcal{X} \rightarrow \mathcal{Y}$, $h \in \mathcal{H}$ called \emph{model} or \emph{hypothesis}, which lives in a specific model space $\mathcal{H}$. The error committed by a model is measured by a \emph{loss function} $l:(\mathcal{X} \times \mathcal{Y}) \times \mathcal{H} \rightarrow \mathbb{R}$, for instance the zero-one loss $l((x, y), h) = \mathbb{I}[y \neq h(x)]$.
The input-output pairs are assumed to be generated a probability distribution $p^*$. 

The \emph{expected risk} of a model $h$
\begin{equation} \label{eq:expected-risk}
L(h) \doteq \mathbb{E}_{(x,y)\sim p^*} [l((x,y), h)]
\end{equation}
is measured as its expected loss $l$ assuming that the pairs $(x_1,y_1),(x_2,y_2),...$ are sampled i.i.d. from the probability distribution $p^*$. The  \emph{expected risk minimiser}
\begin{equation} \label{eq:expected-risk-minimiser}
h^* \doteq \arg \min_{h \in \mathcal{H}} L(h),
\end{equation}
is any hypothesis in the given model space $\mathcal{H}$ that minimises the expected risk (not a random quantity at all). 
Given $n$ training examples, also drawn i.i.d. from $p^*$, the \emph{empirical risk} of a hypothesis $h$ is the average loss over the available training set $\mathcal{D} = \{(x_1,y_1), ..., (x_n,y_n) \}$:
\begin{equation}
\hat{L}(h) \doteq \frac{1}{n} \sum_{i=1}^n l((x_i,y_i),h).
\end{equation}
We can then define the \emph{empirical risk minimiser} (ERM) as:
\begin{equation} \label{eq:empirical-risk-minimiser}
\hat{h} \doteq \arg \min_{h \in \mathcal{H}} \hat{L}(h).
\end{equation}
This is, instead, a random variable which depends on the training data.

In alternative, we are interested in estimating how well the ERM does with respect to the best theoretical model $h^* \in \mathcal{H}$ in the given class:
\begin{equation}
    \label{eq:excess-risk}
L(\hat{h}) - L(h^*)
\end{equation}
This is called the \emph{excess risk}, and is a random variable depending on the training set $T$ (through $\hat{h}$).

Statistical learning theory seeks upper bounds on the difference between the expected risk and the empirical risk of the ERM (which is the only thing we can compute from the available training set), under increasingly more relaxed assumptions about the nature of the hypotheses space $\mathcal{H}$. Two common such assumptions are that either the model space is finite, or that there exists a model with zero expected risk (\emph{realisability} \cite{meir1995stochastic}).

In real-world situations, however, the data distribution may (and often does) vary, causing issues of \emph{domain adaptation} \cite{farahani2021brief} (DA) or \emph{generalisation} \cite{zhou2022domain} (DG).
Domain adaptation and generalisation are interrelated yet distinct concepts in machine learning, as they both deal with the challenges of transferring knowledge across different domains. The main goal of DA is to adapt a machine learning model trained on source domains to perform well on target domains \cite{you2019universal}. In opposition, DG aims to train a model that can generalise well to unseen data/domains not available during training \cite{Piva_2023_WACV}.

Some attempts to derive generalization bounds under more realistic conditions
within classical statistical learning theory have been made.
However, those approaches are characterized by a lack of generalisability, and the use of strong assumptions \cite{caprio2024credal}.
\emph{Imprecise probabilities} \cite{shafer1976mathematical,walley2000towards,cuzzolin2018visions,cuzzolin2021springer}, on the other hand, can provide a radically different solution to the construction of bounds in learning theory. A hierarchy of formalisms aimed at mathematically modeling the ``epistemic'' uncertainty induced by sources such as lack of data, missing data or data which is imprecise in nature \cite{hullermeier2021aleatoric,second-order,volume_caprio}, e.g. in the form of upper and lower bounds on probabilities \cite{cuzzolin2003geometry}, belief functions \cite{cuzzolin2001geometric,cuzzolin2013l_} or convex sets of distributions \cite{antonucci2010credal}, imprecise probabilities have been successfully employed in the design of neural networks providing both better accuracy and uncertainty quantification to predictions \cite{sensoy2018,manchingal2022,manchingal2023,caprio_IBNN,ibcl,manchingal2024}.

To date, however, they have never been considered as a tool to address the foundational issues of statistical learning theory associated with data drifting. The purpose of this paper is to look into how central notions in statistical learning theory, such as realisability, generalise under the assumption that train and test distribution are issued from the same \emph{credal set}, i.e., a convex set of probability distributions.

\section{Probably Approximately Correct (PAC) learning}

The core notion of classical statistical learning theory is that of \emph{probably approximately correct} algorithms. 
\begin{definition}
A learning algorithm is \emph{probably approximately correct} (PAC) if it finds with probability at least $1-\delta$ a model $h \in \mathcal{H}$ which is `approximately correct', i.e., it makes a training error of no more than $\epsilon$.
\end{definition}
According to this definition, PAC learning aims at providing bounds of the kind:
\[
P[L(\hat{h}) - L(h^*) > \epsilon] \leq \delta,
\]
on the difference between the loss of the ERM and the minimal theoretical loss for that class of models. 
Note that the randomness of this event ($P$) is due to the random selection of a particular training set $T$ from a hypothetical collection $\mathcal{T}$.

This is to account for the \emph{generalisation problem} \cite{arjovsky2021out}, the fact that the error we commit when training a model is different from the error one can expect on entirely new data.
The main issue with traditional statistical learning theory is the assumption that training and test data are sampled from the same (unknown) probability distribution. Machine learning deployment `in the wild' \cite{krueger2021out} has shown that that is hardly the case, leading to sometimes catastrophic failure\footnote{{\url{https://www.nytimes.com/2018/03/19/technology/uber-driverless-fatality.html}}}.

In this paper, we take a first step towards robustifying PAC learning, by analysing the proof of its generalisation bounds in the case of \emph{finite, realisable} models (Section \ref{sec:proof}), and sketching a proof under the assumption that training and test distributions come from a same \emph{credal set} \cite{kyburg1987bayesian,levi1980enterprise} (convex set of distributions), Section \ref{sec:sketch}. 

Firstly, however, we review a number of major results from classical statistical learning theory.

\section{Results from statistical learning theory} \label{sec:slt}

\subsection{Bounds for finite, realisable case}

When (1) the model space $\mathcal{H}$ is finite, and (2) there exists a hypothesis $h^* \in \mathcal{H}$ that obtains zero expected risk, that is:
\begin{equation} \label{eq:realisability}
L(h^*) = \mathbb{E}_{(x,y)\sim p^*} [l((x,y), h^*)] = 0,
\end{equation}
a property called \emph{realisability}, PAC holds with

\[ \epsilon = \frac{\log |\mathcal{H}| + \log (1/\delta)}{n}. \]
As it can be verified in\footnote{\url{https://web.stanford.edu/class/cs229t/notes.pdf}}, crucial to the proof of this initial result is the following \emph{union bound} from classical probability theory, which obviously only holds for finite collections of sets:
\begin{equation} \label{eq:union-bound} 
P(A_1 \cup ... \cup A_K) \leq \sum_{k=1}^K P(A_k).
\end{equation}

How can such a result be extended to the case in which realisability does not hold, or we deal with an infinite model space?
Fortunately, the PAC bound can be reduced to a statement about \emph{uniform convergence}, as follows:

\begin{equation} \label{eq:uniform} 
P[L(\hat{h}) - L(h^*) \geq \epsilon] \leq P \left [ \sup_{h \in \mathcal{H}} | L(h) - \hat{L}(h)| \geq \frac{\epsilon}{2} \right ]. 
\end{equation}


Such proofs leverage standard \emph{concentration inequalities} from probability theory, in particular:

\begin{enumerate}
    \item the \emph{Markov inequality} \cite{shadrin2004twelve}
    \[ P[Z \geq t] \leq \frac{E[Z]}{t}; \]
    \item and the \emph{Hoeffding inequality} \cite{bentkus2004hoeffding}
    \begin{equation}\label{eq:Hoeffding} 
    P [ \hat{\mu}_n \geq E[\hat{\mu}_n] + \epsilon ] \leq 
    \exp{\left ( \frac{-2n^2 \epsilon^2}{\sum_{i=1}^n (b_i - c_i)^2} \right )},
    \end{equation}
    where $a_i \leq x_i \leq b_i$ are i.i.d random variables, and $\hat{\mu}_n = \frac{1}{n} \sum_i x_i$. 
\end{enumerate}

\subsection{PAC bounds for finite model spaces}

By using uniform convergence (\ref{eq:uniform}) in conjunction with the Hoeffding inequality (\ref{eq:Hoeffding}) on can get a PAC bound for the finite case without realisability:
\[
\epsilon = \sqrt{\frac{2 (\log |\mathcal{H}| + \log (2/\delta))}{n}}
\]
(see \cite{liang}, equation (183)).

\subsection{PAC bounds for infinite model spaces}

In machine learning, however, in most cases, the model space is infinite (think for instance of support vector machines \cite{pisner2020support}, KNN classifiers, convolutional neural networks \cite{li2021survey}, just to cite a few model types).
Thus, to be useful, generalisation bounds need to apply to infinite model spaces too.
However, when $\mathcal{H}$ is not finite, we cannot use the union-bound constraint (\ref{eq:union-bound}) anymore.

Fortunately, one can use \emph{Mc Diarmid's inequality} \cite{combes2015extension} instead (\cite{liang}, Theorem 8):
\begin{equation} \label{eq:diarmids} 
P [ f(X_1,\ldots, X_n) - E[f(X_1,\ldots, X_n)] \geq \epsilon ] \leq \exp{\left ( \frac{-2\epsilon^2}{\sum_{i=1}^n} c_i^2 \right )}
\end{equation}
whenever the \emph{bounded difference} condition is satified:

\begin{equation} \label{eq:bounded-difference}
| f(x_1,\ldots,x_i,\ldots, x_n) - f(x_1,\ldots,x'_i,\ldots, x_n) | \leq c_i \quad \forall i, x
\end{equation}
for $X_1, \ldots, X_n$ independent random variables. Note that the proof of this inequality requires the notion of \emph{martingale} from probability theory \cite{doob1971martingale}.

In fact, Mc Diarmid's inequality generalises Hoeffding's inequality.

\subsubsection{Derivation}

The derivation of generalisation bounds for infinite model spaces follows this series of steps:
\begin{itemize}
    \item 
The uniform convergence bound (\ref{eq:uniform}) can be expressed in terms of the following variable:
\[
G_n \doteq \sup_{h \in \mathcal{H}} L(h) - \hat{L}(h).
\]
\item
Note that $G_n$ is a deterministic function of the training set $T \in \mathcal{T}$, and satisfies the bounded difference condition (\ref{eq:bounded-difference}).
\item
We can then apply Mc Diarmid's (\ref{eq:diarmids}), and get the \emph{tail bound}:
\begin{equation}
\label{eq:tail-bound}
P [G_n \geq E[G_n] + \epsilon ] \leq \exp{- 2 n \epsilon^2}.
\end{equation}
\item
Now, to obtain the desired PAC bounds for the infinite case, we need to bound the quantity $E[G_n]$.
\item
This, however, depends on the (unknown) data distribution $p^*$.
\end{itemize}

\subsubsection{Symmetrisation and notion of ``ghost'' dataset}

A technique called \emph{symmetrisation} \cite{bousquet2003introduction} can remove the dependency from $p^*$ (which is unknown) to a dependency from the training set $T$ (which we have).

This consists in introducing a \emph{ghost dataset} $T ' = \{ X'_1, \ldots, X'_n \}$ drawn i.i.d. from $p^*$, so that (see \cite{liang}, equation (210)):
\[
E[G_n] = E \left [ \sup_h E[\hat{L}'(h)] - \hat{L}(h) \right ].
\]
This allows us to get a bound for $E[G_n]$ which depends on $p^*$ only via the original dataset $T$ and the ghost dataset $T'$:
\[
E[G_n] \leq E \left[ 
\sup_{h \in H} \frac{1}{n} \sum_{i=1}^n \big [ l(X'_i,h) - l(X_i, h) \big ]
\right ].
\]
We can then remove the dependency from the ghost dataset, by introducing i.i.d. \emph{Rademacher variables} $\sigma_i$ independent of both $T$ and $T'$:
\[
E[G_n] \leq E \left[ 
\sup_{h \in H} \frac{1}{n} \sum_{i=1}^n \sigma_i \big [ l(X'_i,h) - l(X_i, h) \big ]
\right ] 
\]
(since the difference in the square bracket is symmetric, multiplying by the $\sigma$s does not change its distribution!)

\subsubsection{Rademacher's complexity}
\label{sec:rademacher}

We can then define the \emph{Rademacher complexity} \cite{bartlett2005local} as:
\[
R_n (\mathcal{F}) \doteq E \left [ \sup_{f \in \mathcal{F}} \frac{1}{n} \sum_{i=1}^n \sigma_i f(Z_i)
\right ],
\]
where $Z_i \sim p^*$, $\sigma_i \sim \mathcal{U}(-1,+1)$, for any function $f: X \times Y \rightarrow \mathbb{R}$ (for instance, the loss function above).

In opposition, the ``empirical" Rademacher complexity
\[
\hat{R}_n (\mathcal{F}) \doteq E \left [ \sup_{f \in \mathcal{F}} \frac{1}{n} \sum_{i=1}^n \sigma_i f(Z_i)
|
Z_{1:n}
\right ]
\]
is a random variable on $Z_{1:n} = T$, such that
\[
R_n = E[\hat{R}_n].
\]

Putting together the tail bound (\ref{eq:tail-bound}) with the notion of Rademacher complexity we get the desired bound for infinite model spaces (\cite{liang}, Theorem 9). PAC bound holds with
\[
\epsilon = 4 R_n(\mathcal{A}) + \sqrt{\frac{2 \log (2/\delta)}{n}}
\]
where $\mathcal{A}$ is the \emph{loss class}
\[
\mathcal{A} = \{ (x,y) \mapsto l(x,y,h) : h \in \mathcal{H} \}.
\]

Our objective is to derive similar generalisation bounds under more relaxed assumption modelling the epistemic uncertainty about the data distribution. In this work, in particular, we provide an initial study on how realisability can be generalised to a credal setting.

\section{Bounds for realisable finite hypothesis classes} \label{sec:proof}

Under the assumption that: (1) the model space $\mathcal{H}$ is finite, and (2) there exists a hypothesis $h^* \in \mathcal{H}$ that obtains zero expected risk, that is (\ref{eq:realisability}) \emph{realisability} holds, the following result holds\footnote{{\url{https://web.stanford.edu/class/cs229t/notes.pdf}}}. 
\begin{theorem} \label{the:the-4}
Let $\mathcal{H}$ be a hypothesis class, where each hypothesis $h \in \mathcal{H}$ maps some $\mathcal{X}$ to $\mathcal{Y}$, $l$ be the zero-one loss: $l((x, y), h) = \mathbb{I}[y \neq h(x)]$,
$p^*$ be any distribution over $\mathcal{X} \times \mathcal{Y}$ and $\hat{h}$ be the empirical risk minimiser (\ref{eq:empirical-risk-minimiser}). Assume that (1) and (2) hold.
Then, with probability at least $1-\delta$:
\begin{equation} \label{eq:th4-1}
L(\hat{h}) \leq \frac{\log |\mathcal{H}| + \log (1/\delta)}{n}.
\end{equation}
\end{theorem}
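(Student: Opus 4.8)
The plan is to reduce the statement to the finite union bound (\ref{eq:union-bound}) via a per-hypothesis ``memorisation'' estimate. First I would exploit realisability to pin down the empirical risk of the ERM: since $L(h^*) = 0$ and the zero-one loss is nonnegative, $h^*$ misclassifies no point in the support of $p^*$, hence $\hat{L}(h^*) = 0$ on any training set drawn i.i.d.\ from $p^*$; by optimality of the ERM, $\hat{L}(\hat{h}) \le \hat{L}(h^*) = 0$, so $\hat{L}(\hat{h}) = 0$ almost surely. This is what makes ``zero empirical risk but large true risk'' the \emph{only} way the bound can fail.

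Next, set the target accuracy to $\epsilon \doteq (\log |\mathcal{H}| + \log (1/\delta))/n$ and call a hypothesis $h \in \mathcal{H}$ \emph{bad} if $L(h) > \epsilon$. The event $\{ L(\hat{h}) > \epsilon \}$ is contained in $B \doteq \{ \exists h \in \mathcal{H} : L(h) > \epsilon \text{ and } \hat{L}(h) = 0 \}$: on the complement of $B$, every hypothesis with zero empirical risk --- in particular $\hat{h}$, by the previous paragraph --- has true risk at most $\epsilon$. So it suffices to prove $P(B) \le \delta$.

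For a single bad hypothesis $h$, the $n$ training points are i.i.d.\ and $\hat{L}(h) = 0$ means $h$ classifies all of them correctly; each is classified correctly with probability $1 - L(h) < 1 - \epsilon$, whence $P[\hat{L}(h) = 0] < (1-\epsilon)^n \le e^{-\epsilon n}$ using $1 - x \le e^{-x}$. I would then apply the union bound (\ref{eq:union-bound}) over the (at most $|\mathcal{H}|$) bad hypotheses to obtain $P(B) \le |\mathcal{H}|\, e^{-\epsilon n}$, and substitute the chosen $\epsilon$, for which $|\mathcal{H}|\, e^{-\epsilon n} = |\mathcal{H}| \cdot e^{-(\log |\mathcal{H}| + \log(1/\delta))} = |\mathcal{H}| \cdot \frac{\delta}{|\mathcal{H}|} = \delta$. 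Passing to complements gives $P[L(\hat{h}) \le \epsilon] \ge 1 - \delta$, which is (\ref{eq:th4-1}).

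I do not expect a genuine obstacle here; the two delicate points are simply the two places where the hypotheses (1)--(2) are used. Realisability is needed twice --- to force $\hat{L}(\hat{h}) = 0$ and to drive the per-hypothesis probability $(1-\epsilon)^n$ --- while finiteness of $\mathcal{H}$ is exactly what licenses the union bound (\ref{eq:union-bound}). This last step is the one that fails when either assumption is relaxed, and it is precisely the hinge that the credal generalisation of the following section must replace.
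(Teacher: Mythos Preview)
Your proposal is correct and follows essentially the same argument as the paper: both use realisability to conclude $\hat{L}(\hat{h})=0$, reduce $\{L(\hat{h})>\epsilon\}$ to the event that some bad hypothesis attains zero empirical risk, bound the per-hypothesis probability by $(1-\epsilon)^n\le e^{-\epsilon n}$ via i.i.d.\ sampling, and finish with the finite union bound. The only cosmetic difference is that the paper defines $B$ as the \emph{set} of bad hypotheses and works with $P[\hat{h}\in B]$, whereas you define $B$ as the corresponding \emph{event}; the logic is identical. One small inaccuracy in your closing commentary: the union bound step relies only on finiteness of $\mathcal{H}$, not on realisability --- when realisability is dropped it is the earlier step $\hat{L}(\hat{h})=0$ that breaks, not the union bound.
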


\begin{proof}
Let $B = \{h \in \mathcal{H} : L(h) > \epsilon \}$ be the set of `bad' hypotheses. We wish to upper bound the probability \footnote{Note that this is a probability measure on the space of models $\mathcal{H}$, completely distinct
from the data-generating probability $p^*$.} of selecting a bad hypothesis:
\begin{equation} \label{UB_prob}
    P[L(\hat{h}) > \epsilon] = P[\hat{h} \in B].
\end{equation}
Recall that the empirical risk of the ERM is always zero, $\hat{L}(\hat{h}) = 0$, since at least $\hat{L}(h^*) = L(h^*) = 0$. So if we selected a bad hypothesis $(\hat{h} \in B)$, then some
bad hypotheses must have zero empirical risk: $\hat{h} \in B$ implies $\exists h \in B : \hat{L}(h) = 0$ which is equivalent to say that:
\begin{equation} \label{thrm_1_eq6}
    P[\hat{h} \in B] \leq P[\exists h \in B: \hat{L}(h)=0].
\end{equation}
Firstly, we need to bound $P[\hat{L}(h) = 0]$ for any fixed $h \in B$. On each training example, hypothesis $h$ does not err with probability $1 - L(h)$. Since the training examples are i.i.d. and the fact that $L(h) > \epsilon$ for $h \in B$:
\begin{equation*}
    P[\hat{L}(h) = 0] = (1 - L(h))^n \leq (1 - \epsilon)^n \leq e^{-\epsilon n}~~~~ \forall h \in B,
\end{equation*}
where the first equality comes from the examples being i.i.d. (and the shape of the empirical risk), and the last step
follows since $1 - a \leq e^{-a}$. Note that the probability $P[\hat{L}(h) = 0]$ decreases exponentially with $n$.

Secondly, we need to show that the above bound holds simultaneously for all $h \in B$. Recall that 
\[
P(A_1 \cup ... \cup A_K) \leq 
\sum_k P(A_k). 
\]
Note also that the hypothesis space is finite here. 

When applied to the
(non-disjoint) events $A_{h} = \{\hat{L}(h) = 0\}$ the union bound yields:
\begin{equation*}
    P[\exists h \in B : \hat{L}(h) = 0] \leq \sum_{h\in B} P[\hat{L}(h)=0].
\end{equation*}
This is a property of probability measures: the probability of a union of events is smaller than the sum of the probabilities of the individual events. Finally: 
\[
\begin{array}{c}
P[\hat{L}(h) \geq \epsilon] = P[\hat{h} \in B] \leq P[\exists h \in B: \hat{L}(h) = 0] \leq \sum_{h \in B} P[\hat{L}(h) = 0] \\ \leq |B|e^{- \epsilon n}
\leq |\mathcal{H}|e^{- \epsilon n} \doteq \delta, 
\end{array}
\]
where the last inequality holds because $B$ is a subset of $\mathcal{H}$.

By rearranging the above inequality, we obtain: 
\[
\epsilon = \frac{\log |\mathcal{H}| + \log (1/\delta)}{n}, 
\]
i.e., \eqref{eq:th4-1}. 

Inequality \eqref{eq:th4-1} should be interpreted in the following way: with probability at least $1 - \delta$ the expected loss of the empirical risk minimiser $\hat{h}$ (the particular
model selected after training on $n$ examples) is bounded by the ratio on the right-hand side. If we need to obtain the expected risk at most $\epsilon$ with confidence at least $1 - \delta$ we need to train the model on at least 
\[
n = \frac{\log |\mathcal{H}| + \log (1/\delta)}{\epsilon}
\]
many examples.

The result is \emph{distribution-free}  , as it is independent of the choice of $p^{*}(x, y)$,
but does rely on the assumption that training and test distribution are the same.
\end{proof}

\section{Bounds under credal generalisation: a sketch} \label{sec:sketch}

A credal generalisation of Theorem \ref{the:the-4} would thus read as follows.

\begin{theorem} \label{theorem_2}
Let $\mathcal{H}$ be a finite hypothesis class, $\mathcal{P}$ be a credal set (convex set of probability measures) over $\mathcal{X} \times \mathcal{Y}$,
and let a training set of samples $\mathcal{D} = \{(x_i, y_i), i=1,...,n\}$ be drawn from one of the distributions $p^* \in \mathcal{P}$ in this credal set. Let
\begin{equation*}
   \hat{h} \doteq \arg \min_{h \in \mathcal{H}} \hat{L}(h) =
   \arg \min_{h \in \mathcal{H}} \frac{1}{n} \sum^{n}_{i=1} l((x_i, y_i), h) 
\end{equation*}
be the empirical risk minimiser, where $l$ is the $0-1$ loss. Assume that
\begin{equation} \label{eq_thrm_2}
    \exists h^* \in \mathcal{H}, p^* \in \mathcal{P}: \mathbb{E}_{p^*}[l] = L_{p^{*}}(h^*)=0
\end{equation}
\emph{(credal realisability)} holds. Then, with probability at least $1 - \delta$:
\begin{equation}
    P \left[\max_{p \in P} L_{p}(\hat{h}) > \epsilon \right] \leq \epsilon(\mathcal{H}, \mathcal{P}, \delta)
\end{equation}
with $\epsilon$ a function of the size of the model space $\mathcal{H}$, of the credal set $\mathcal{P}$, and of the $\delta$. 
\end{theorem}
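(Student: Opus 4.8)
The plan is to follow the skeleton of the proof of Theorem~\ref{the:the-4}, replacing the single sampling distribution by the whole credal set wherever a worst case is needed, and to collect the resulting slack into a scalar ``size'' of $\mathcal{P}$. Throughout, write $E_h \doteq \{(x,y) : h(x) \neq y\}$ for the error region of $h$, so that $L_p(h) = p(E_h)$ for the $0$--$1$ loss, and introduce the upper risk $\overline{L}(h) \doteq \sup_{p \in \mathcal{P}} L_p(h)$ (which coincides with $\max_{p\in\mathcal P} L_p(h)$ under the closure/compactness implicit in the statement) together with the $\mathcal{H}$-restricted total-variation ``diameter''
\[
\Delta(\mathcal{H},\mathcal{P}) \doteq \sup_{p,q \in \mathcal{P}} \sup_{h \in \mathcal{H}} \bigl| p(E_h) - q(E_h) \bigr| ,
\]
which is bounded by the ordinary total-variation diameter of $\mathcal{P}$ and plays, in this setting, the role that discrepancy measures play in domain-adaptation theory. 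The target is a bound of the form $\epsilon(\mathcal{H},\mathcal{P},\delta) = \Delta(\mathcal{H},\mathcal{P}) + \bigl(\log|\mathcal{H}| + \log(1/\delta)\bigr)/n$.

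First I would fix the (unknown) sampling distribution $p^* \in \mathcal{P}$ and read credal realisability \eqref{eq_thrm_2} in the strong sense that $p^*$ is itself a realising distribution, i.e. $\exists h^* \in \mathcal{H}$ with $L_{p^*}(h^*) = 0$; then $\hat{L}(h^*) = 0$ and hence $\hat{L}(\hat h) = 0$ almost surely, exactly as in Theorem~\ref{the:the-4}. Letting $B \doteq \{ h \in \mathcal{H} : \overline{L}(h) > \epsilon \}$ be the set of \emph{credally bad} hypotheses, the same chain of inclusions gives
\[
P\bigl[\max_{p \in \mathcal{P}} L_p(\hat h) > \epsilon\bigr] = P[\hat h \in B] \leq P\bigl[\exists h \in B : \hat{L}(h) = 0\bigr] .
\]

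The genuinely new step is the per-hypothesis tail bound, and here the credal structure bites: for $h \in B$ we only know $L_q(h) > \epsilon$ for \emph{some} $q \in \mathcal{P}$, whereas $\hat L(h)$ concentrates around the risk under the \emph{actual} sampling distribution $p^*$. By definition of $\Delta$, however, $L_{p^*}(h) \geq L_q(h) - \Delta(\mathcal{H},\mathcal{P}) > \epsilon - \Delta(\mathcal{H},\mathcal{P})$, so --- assuming $\epsilon > \Delta(\mathcal{H},\mathcal{P})$, without which the claim is vacuous --- the i.i.d.\ computation of Theorem~\ref{the:the-4} gives $P[\hat{L}(h) = 0] = (1 - L_{p^*}(h))^n \leq e^{-(\epsilon - \Delta(\mathcal{H},\mathcal{P}))n}$ for every $h \in B$. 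A union bound over the finite set $B \subseteq \mathcal{H}$ then yields $P[\hat h \in B] \leq |\mathcal{H}|\, e^{-(\epsilon - \Delta(\mathcal{H},\mathcal{P}))n}$; equating the right-hand side with $\delta$ and solving for $\epsilon$ delivers the promised $\epsilon(\mathcal{H},\mathcal{P},\delta)$. The bound is uniform in $p^* \in \mathcal{P}$ (hence ``credally distribution-free'') and collapses to Theorem~\ref{the:the-4} when $\mathcal{P}$ is a singleton, $\Delta = 0$.

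I expect two obstacles. The first is the reading of credal realisability: \eqref{eq_thrm_2} as literally written only posits \emph{some} realising pair $(h^*,\tilde p)$, and $\tilde p$ need not generate $\mathcal{D}$; under this weaker reading $\hat{L}(\hat h)$ is no longer $0$ but only $O(\Delta)$ in expectation, so one must replace the event $\{\hat L(h)=0\}$ by $\{\hat L(h)\le\tau\}$ for a suitable $\tau$, control $\hat L(\tilde h)$ via Hoeffding \eqref{eq:Hoeffding}, and bound $P[\hat L(h)\le\tau]$ for credally bad $h$ by a Chernoff-type lower tail --- in effect the non-realisable finite-class argument run with two $\Delta$-corrections and a $\sqrt{1/n}$ (rather than $1/n$) rate. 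The second, more fundamental, is that $\Delta(\mathcal{H},\mathcal{P})$ may simply exceed the accuracy one wants, making the bound empty; tightening it would require exploiting more of the geometry of $\mathcal{P}$ than its diameter --- its extreme points, or a credal counterpart of the Rademacher complexity of Section~\ref{sec:rademacher} --- which I would defer to future work.
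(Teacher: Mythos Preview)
Your proposal is correct under the ``strong reading'' you adopt, and it goes substantially further than the paper's own argument. The paper's proof of Theorem~\ref{theorem_2} is explicitly a \emph{sketch}: it defines the credally-bad set $B$ exactly as you do and observes $P[\max_p L_p(\hat h)>\epsilon]=P[\hat h\in B]$, but then stops at the obstacle you also flag --- that credal realisability \eqref{eq_thrm_2} as written does \emph{not} force $\hat L(\hat h)=0$, so inequality \eqref{thrm_1_eq6} need not hold --- and simply lists two possible repairs without deriving any concrete $\epsilon(\mathcal H,\mathcal P,\delta)$. Your strong reading (the sampling distribution is itself realising) coincides with the paper's first repair (``assume $\hat p=p^*=\arg\min_p L_p(h^*)$''); from there you add the genuinely new ingredient $\Delta(\mathcal H,\mathcal P)$, which lets you transfer the worst-case risk $\overline L(h)>\epsilon$ back to the sampling risk $L_{p^*}(h)>\epsilon-\Delta$ and complete the union-bound argument to an explicit $\epsilon=\Delta+(\log|\mathcal H|+\log(1/\delta))/n$. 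The paper never introduces such a discrepancy quantity and leaves $\epsilon(\mathcal H,\mathcal P,\delta)$ entirely unspecified.

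Where the two diverge is in the fallback for the weak reading. The paper's second repair is to \emph{strengthen the assumption} to ``uniform credal realisability'' ($\forall p\in\mathcal P\ \exists h_p^*$ with $L_p(h_p^*)=0$), which restores $\hat L(\hat h)=0$ directly. You instead keep the weak assumption and propose to absorb the slack analytically --- replace $\{\hat L(h)=0\}$ by $\{\hat L(h)\le\tau\}$, control both tails via Hoeffding/Chernoff, and accept a $\sqrt{1/n}$ rate with two $\Delta$-corrections. Your route is more in the spirit of domain-adaptation bounds and yields a usable formula; the paper's route keeps the fast $1/n$ rate but at the price of a markedly stronger hypothesis. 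Both are legitimate; neither is carried to completion in the paper.
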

\begin{proof}
How does the proof of Theorem \ref{the:the-4} generalise to the credal case? We can note that, if we define 
\begin{equation*}
  B \doteq \left\{h : \max_{p \in P} L_{p}(h) > \epsilon \right\}  
\end{equation*}
as the set of models which do not guarantee the bound in the worst case, we get,
just as in the classical result \eqref{UB_prob}:
\begin{equation*}
 P \left[\max_{p \in P} L_{p}(\hat{h}) > \epsilon \right] = P[\hat{h} \in B].   
\end{equation*}
However, while in the classical case $\hat{L}(\hat{h}) = 0$, in the credal one we only have that, for all $\hat{p} \in \mathcal{P}$: 
\begin{equation*}
    \hat{L}(h^*)= L_{\hat{p}}(h^*) \geq \min_{p \in \mathcal{P}} L_{p}(h^*) = 0,
\end{equation*}
where the last passage follows from the credal realisability assumption \eqref{eq_thrm_2}. Thus, \eqref{thrm_1_eq6} does not hold, and the above argument which tries to bound $P[\hat{L}(h) = 0]$ does not apply here.

One option is to assume that $\hat{p} = p^* = \arg \min_{p} L_{p}(h^*)$. This would allow \eqref{thrm_1_eq6} to remain valid. A different generalisation of realisability can also be proposed:
\begin{equation}
    \forall p \in \mathcal{P}, \exists h_{p}^{*} \in \mathcal{H} : \mathbb{E}_{p}[l(h_{p}^*)] = L_{p}(h_{p}^*)=0, 
\end{equation}
which we can term \textit{uniform credal realisability}. Under the latter assumption, we
would get:
\begin{equation*}
    \hat{L}(\hat{h}) = \min_{h \in \mathcal{H}} \hat{L}(h) = \min_{h \in \mathcal{H}} \hat{L}_{\hat{p}}(h) = L_{\hat{p}}(h_{p}^*)=0. 
\end{equation*}
\end{proof}

\section{Discussion and conclusions} \label{sec:conclusions}

In this work we probed the possible generalisation of statistical learning theory under the assumption that data distributions live in a credal set. We considered the finite realisable case and proposed some generalised realisability conditions to be met to be able to provide generalisation bounds.
It turns out that generalising realisability to the credal case is not trivial, and more than one alternatives should be considered. This will be the subject of our future work.

In particular, this analysis needs to be completed and extended to the case of infinite, non-realisable model spaces. Random sets \cite{molchanov2005theory} should possibly be considered, rather than credal sets, for the structure they provide. At any rate, based on the structure of traditional PAC proofs, the study of generalised concentration inequalities \cite{cozman2008concentration} for credal or random sets appears to be crucial to achieve similar ``distribution-free'' results.

Relevantly, in \cite{caprio2024credal} a different credal approach is proposed which relies on defining a new learning setting in which models are inferred from a (finite) sample of training sets, rather than a single training set, each assumed to have been generated by a single data distribution (as in classical SLT). Within such a setting, the authors are able to derive  generalisation bounds to the expected risk of a model learned in this new learning setting, under the assumption that the epistemic uncertainty induced by the available training sets can be described by a
credal set. 

Last but not least, the presented results still assume a probabilistic PAC framework for generalisation bounds. It may be argued that PAC itself could and should be generalised to epistemic uncertainty, moving the application of the epistemic principle to an even more abstract level.

\section*{Acknowledgements}

This work has received funding from the European Union's Horizon 2020 research and innovation programme under grant agreement No. 964505 (E-pi). 

\bibliographystyle{splncs04}
\bibliography{references}

\end{document}